\documentclass{article}
\usepackage[utf8]{inputenc}
\usepackage{enumitem, algorithm, algorithmic, fullpage, natbib, cite}
\usepackage{amsmath,amsfonts,amsthm,  mathrsfs,mathtools}
\usepackage{xcolor}
\usepackage{float}
\usepackage{hyperref}
\usepackage{booktabs}
\usepackage{subfig}

\newcommand\blfootnote[1]{%
  \begingroup
  \renewcommand\thefootnote{}\footnote{#1}%
  \addtocounter{footnote}{-1}%
  \endgroup
}

\theoremstyle{plain}
\newtheorem{theorem}{Theorem}[section]
\newtheorem{lemma}[theorem]{Lemma}
\newtheorem{corollary}[theorem]{Corollary}

\newtheorem{remark}[theorem]{Remark}

\theoremstyle{definition}   
\newtheorem{definition}[theorem]{Definition}

\author{
 George Z. Li$^{2,*}$\quad
 Ann Li$^{1}$\quad 
 Madhav Marathe$^{1,3}$ \and
 Aravind Srinivasan$^{2}$ \quad
 Leonidas Tsepenekas$^{2}$ \quad
 Anil Vullikanti$^{1,3}$
}

\date{}

\newcommand{\prob}{\textsc{MobileVaccClinic}}

\newcommand{\cover}{\textsc{ClientCover}}

\newcommand{\fpt}{\textsc{FPT}}
\newcommand{\homecenter}{\textsc{HomeCenters}}
\newcommand{\pop}{\textsc{MostActive}}

\DeclareMathOperator*{\poly}{poly}

\DeclarePairedDelimiter\floor{\lfloor}{\rfloor}

\title{Deploying Vaccine Distribution Sites for Improved Accessibility and Equity to Support Pandemic Response}

\begin{document}

\maketitle

\blfootnote{$^1$Biocomplexity Institute and Initiative, University of Virginia}
\blfootnote{$^2$Department of Computer Science, University of Maryland}
\blfootnote{$^3$Department of Computer Science, University of Virginia}
\blfootnote{$^*$Correspondence to George Li at gzli929@gmail.com}

\begin{abstract}
In response to COVID-19, many countries have mandated social distancing and banned large group gatherings in order to slow down the spread of SARS-CoV-2. These social interventions along with vaccines remain the best way forward to reduce the spread of SARS CoV-2. In order to increase vaccine accessibility, states such as Virginia have deployed mobile vaccination centers to distribute vaccines across the state. When choosing where to place these sites, there are two important factors to take into account: accessibility and equity. We formulate a combinatorial problem that captures these factors and then develop efficient algorithms with theoretical guarantees on both of these aspects. Furthermore, we study the inherent hardness of the problem, and demonstrate strong impossibility results. Finally, we run computational experiments on real-world data to show the efficacy of our methods. 
\end{abstract}

\section{Introduction}

The COVID-19 pandemic continues to cause immense social, health, and economic impact globally. As of writing this paper, the U.S. alone has seen over 850,000 deaths and over 65 million confirmed cases; see \citep{vdhDashboard} for the latest numbers. Vaccines have proven to be very effective in reducing the health burden of the pandemic and continue to be the best strategy to control disease spread and potentially end the pandemic in its current form. Despite the effectiveness, administering COVID-19 vaccines to all eligible individuals in the population continues to be a challenge.
As of February 2022, only 64\% of the eligible population is fully vaccinated in the United States~\citep{nyt}. Furthermore, there is a significant disparity in vaccination rates between demographics---the rate among Whites was 1.2 times that of African Americans and 1.1 times that of Hispanic people. The reasons why some people have not been vaccinated include distrust and skepticism regarding COVID-19, accessibility issues, and concerns about the cost~\citep{nytimes-may2021}. Lottery schemes, mandates, vaccine clinics, and other strategies have been implemented to increase the vaccination rate with varying levels of success. Since cost and accessibility remain a challenge for a fraction of the population, especially minorities and people in poorer neighborhoods, mobile vaccine clinics have been an important part of the public health response strategy of government agencies. In this paper, we study the problem of deploying mobile vaccine administration sites with the goal of improving the accessibility of vaccines to individuals.

Deploying vaccination clinics is a form of a facility location problem~\citep{turkoglu:faclocsurvey, drezner-book}, referred to as the $k$-supplier problem, in which a limited set of $k$ facilities needs to be placed so that every person (i.e., a client) is ``close'' to a facility; a common metric to measure closeness is the maximum distance between a client and their closest facility, though many other notions have been studied. Facility location problems are well understood, and efficient approximation algorithms and practical heuristics exist.
However, deploying vaccine clinics leads to a novel facility location problem (referred to as the \prob{} problem) since people (clients) are mobile rather than stationary. Suppose each person $p$ visits a set $S_p$ of locations during the day; then it suffices to deploy a clinic close to at least one location in $S_p$. Our contributions are the following: 

\begin{itemize}
\item 
We formalize the \prob{} problem for modeling the deployment of mobile vaccine clinics in a way that takes into account human mobility patterns (by considering the distance to a facility from any of the locations visited by a person), fairness (by requiring that at least a fraction of people in each demographic group have a nearby clinic), outliers (by allowing partial coverage), and capacity constraints (by restricting the number of people assigned to each clinic). We show that this problem is much harder than the standard $k$-supplier problem and getting any bounded polynomial-time approximation to the minimum distance is not possible, thus motivating bicriteria algorithms.
\item
We design two approximation algorithms. The first is a fixed-parameter tractable algorithm that gives a $3$-approximation, where the parameterization is on the number $u$ of locations where people travel. Note that even this is non-trivial, since the possible locations $\mathcal{S}$ where we can place facilities is still variable, so there are still $|\mathcal{S}| \choose k$ possible solutions. The second algorithm, based on covering problems, is a $(1,\log{n}+1)$-bicriteria one, where $n$ is the number of people. This means that if we violate the budget on the number of vaccine centers by a $\log{n}+1$ multiplicative factor, we can find a solution that is optimal. Finally, we extend our algorithms to have fairness guarantees in both the original and outliers formulation of the problem.
\item
We evaluate our algorithms for a realistic population of a county in Virginia. We find that our algorithms generally give a significant improvement over natural baselines. In particular, we see many shortcomings of only considering a client's home (rather than their entire travelling route), emphasizing the importance of our problem formulation. Additionally, our algorithms allow us to compute a tradeoff between the maximum distance to a clinic and the number of clinics; this naturally enables us to give a recommendation to the government on the most cost-effective budget policy. Finally, the solutions computed by our algorithm have a useful ``kernel'' property---as the budget is increased, the locations which were picked for a lower budget are still part of the solution. This implies that an incrementally constructed solution (which is how such facilities would be deployed in practice since the budget is not known ahead of time) will still be good.   
\end{itemize}

We remark that though our framework is motivated by the current COVID-19 pandemic, 
it can be generally applied to both epidemiological and non-epidemiological settings. Examples within healthcare include placing testing and treatment units (as deployed during the Ebola crisis) and delivering healthcare in rural settings for resource-limited countries. Beyond healthcare, the placement of mobile distribution centers arises in disaster-management settings. For instance, shelters need to be set up for individuals evacuating during a hurricane or forest fire, who might need food and other basic survival kits. During such large events, mobile sites are also used to place security posts and information kiosks.

\section{Preliminaries}
\label{sec:prelim}
Recall that we wish to place vaccination centers such that vaccines are more accessible to the population. This question is often formulated as an appropriate variant of the facility location problem, which is well-studied in the operations research literature (see Related Work). In our paper, we introduce a new variant that follows a recent line of work on integrating the mobility patterns of the population into disease models \citep{chang2021mobility, wang2020using}. As is standard, we will use the distance from a vaccination center as the metric for defining accessibility. The key change, however, is that clients will be represented by a set of locations that they visit (within a time period) instead of just one point. Though this will make the problem much harder to solve efficiently, it will more strongly correlate with the likelihood of a person going to a vaccine center.

\begin{figure}
    \centering
    \includegraphics[scale=0.23]{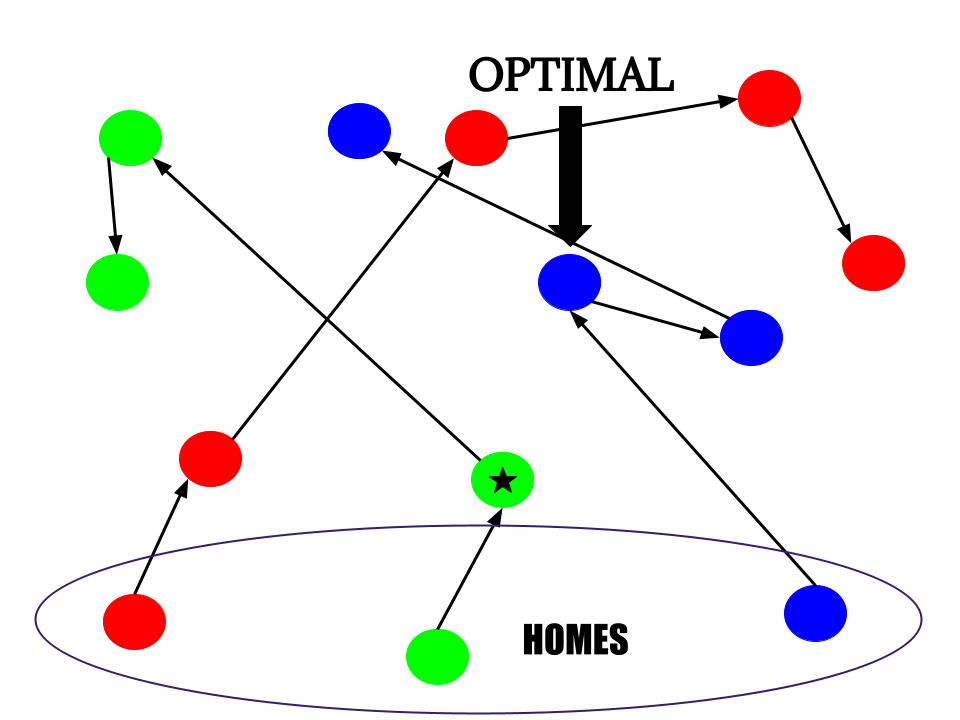}
    \caption{An example of \prob{}. The different colors represent different people and the circles represent the locations they visit (with the bottom three being their homes). In this case, the blue location in the middle is an optimal location to place a vaccination center. If we instead only considered homes in the problem formulation, we would place the vaccination center in the green circle marked with a star, which would require people to deviate from their normal travels much more when getting a vaccine.}
    \label{fig:example}
\end{figure}

\textbf{Problem Statement:} We are given a set of locations $\mathcal{C}$ in a metric space characterized by the distance function $d: \mathcal{C} \times \mathcal{C} \mapsto \mathbb{R}_{\geq 0}$. We additionally have a set of $n$ individuals/clients $\mathcal{P}$. Each individual $p \in \mathcal{P}$ is associated with a set $S_p \subseteq \mathcal{C}$, which we can interpret as the set of locations $p$ visits throughout the day. Finally, the input also includes a positive integer $k$ constraining the number of facilities we can place, and a set $\mathcal{S}\subseteq\mathcal{C}$ containing the locations where we are allowed to place facilities. The goal of \prob{} is to choose a set $F \subseteq \mathcal{S}$ with $|F|\le k$ to place facilities, such that for every $p \in \mathcal{P}$ we have $d(S_p, F) \leq R$, for the minimum $R$ possible. Here, we use the standard notation where $d(S, F) = \min_{j \in S, j' \in F}d(j,j')$. Intuitively, this objective tries to minimize the maximum distance between the set of facilities placed and the locations visited by any client. We also consider three natural extensions:
\begin{itemize}
    \item \textbf{Outliers}: in order to achieve herd immunity, we only need to vaccinate a large portion of the population (rather than every single person). In order to model this, we can take as input a parameter $q$, and seek to provide for only $\lfloor qn \rfloor$ of the clients, thus ignoring the remaining ones. Formally, the new objective is to minimize $R$ such that $|\{p\in\mathcal{P}:d(S_p,F)\le R\}|\ge\lfloor qn\rfloor$.
    \item \textbf{Fairness}: many studies have shown that COVID-19 disproportionately affects some demographic groups \citep{tai2020disproportionate}. To counteract this, we seek to guarantee that different demographic groups have similar accessibilities to vaccines. As an example, when we solve the outliers formulation, we can guarantee that we are covering the same proportion of each demographic group when deciding the facility placements.
    \item \textbf{Capacity}: it is natural to assume that the number of vaccines that can be stored in each mobile facility is limited, say at most $L$. Therefore, in this setting, we need to guarantee that every chosen facility will have at most $L$ people assigned to it.
\end{itemize}

\section{Related Work}
\label{sec:related}

Due to its applications in a large number of domains, facility location and broader location theory is a very well-studied area; see, e.g., the surveys by 
\citep{drezner-book,turkoglu:faclocsurvey,Afshari2014ChallengesAS}. The general goal in this family of problems is to deploy facilities to provide the best possible service to a set of clients. A huge number of objectives have been considered, along with a plethora of variations such as fairness variants and online or stochastic versions.
The \prob{} problem we study here is a generalization of the well-known $k$-center problem, where the goal is to open at most $k$ centers while minimizing the maximum distance of a point to its closest center. For this simple clustering setting, there exist efficient $2$-approximation algorithms \citep{Hochbaum1985, Gonzalez1985}. Furthermore, it is shown that unless P=NP this is the best achievable approximation ratio \citep{Hochbaum1986}.

Location theory problems have also been considered in the area of healthcare, e.g.,~\citep{nedjati:ijmo12,meskarian:plos17,devries-pom20,Afshari2014ChallengesAS}. A lot of this work has been focused on placing mobile clinics or temporary facilities to ensure good service, especially in resource-poor countries. As mentioned in~\citep{Afshari2014ChallengesAS}, the healthcare domain poses new challenges for location theory, such as uncertainty, reliability, operation efficiency, patient safety, and cost-effectiveness. Prior work has generally not considered the mobility of clients at a detailed scale, which provides more flexibility in deploying facilities. Our formulation of \prob{} explicitly models human mobility, thus providing a realistic framework for public health agencies in their response efforts.

\section{Hardness Result}

For our hardness result, we use the following problem studied in \citet{anegg2020}, named $\gamma$-Colorful $k$-Center or $\gamma$C$k$C for short. This problem is a generalization of the outliers version of $k$-center: in addition to the classical constraints, colors (representing demographic groups) are assigned to each client and the problem requires that a sufficient number of points of each color is covered. The formal definition is given below:

\begin{definition}
Let $\gamma\in\mathbb{Z}_{\geq1}$ be the number of colors, $k \in \mathbb{Z}_{\geq 1}$ be the budget, $\mathcal{C}$ be a set of points in a metric space, and $d:\mathcal{C}\times\mathcal{C}\xrightarrow{}{\mathbb{R}_{\ge 0}}$ be the distance function on $\mathcal{C}$. For each $\ell \in [\gamma]$, let $\mathcal{C}_{\ell} \subseteq \mathcal{C}$ be the points with color $\ell$ and let $m_{\ell}\in\mathbb{Z}_{\ge 1}$ be the number of points with color $\ell$ which need to be covered. $\gamma$C$k$C asks for the minimum radius $R$ together with a set $F \subseteq \mathcal{C}$ with $|F|\le k$, such that
at least $m_{\ell}$ points of $\mathcal{C}_{\ell}$ are covered within distance $R$ by $F$. Formally, if $B(F,R) = \{j \in \mathcal{C}: d(j,F) \leq R\}$ then we want $|B(F,R) \cap \mathcal{C}_{\ell}| \geq m_{\ell}$ for every $\ell \in [\gamma]$.
\end{definition}

In \citet{anegg2020} the authors prove the following hardness result, which we use to prove a hardness result for our problem later on.

\begin{lemma}\label{thm:anegg}
When $\gamma$ is not a constant, there exist instances of $\gamma$C$k$C with $m_{\ell} = 1$ for all $\ell \in [\gamma]$, such that if $R^*$ is the optimal value of the instance, the following hold:
\begin{itemize}
    \item For any $\rho > 0$, it is NP-hard to find $F \subseteq \mathcal{C}$ with $|F| \leq k$ and $|B(F,\rho R^*) \cap \mathcal{C}_{\ell}| \geq m_{\ell}$ for all $\ell $. In words, it is NP-hard to devise any approximation algorithm for $\gamma$C$k$C.
    \item For any $\rho > 0$ and $\epsilon \in (0,1)$, it is NP-hard to find $F \subseteq \mathcal{C}$ with $|F| \leq (1-\epsilon) \ln \gamma \cdot k$ and $|B(F,\rho R^*) \cap \mathcal{C}_{\ell}| \geq m_{\ell}$ for every $\ell \in [\gamma]$. In words, it is NP-hard to devise any bicriteria approximation for $\gamma$C$k$C, whose chosen centers will be at most $(1-\epsilon) \ln \gamma \cdot k$.
    \item These problematic instances consist of points on a line.
\end{itemize}
\end{lemma}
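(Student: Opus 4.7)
The plan is to reduce from Set Cover, invoking Feige's theorem: unless $\mathrm{P} = \mathrm{NP}$, Set Cover on a universe of size $N$ admits no polynomial-time $(1-\epsilon)\ln N$-approximation. In particular, for every fixed $\epsilon > 0$ there is a family of instances on which distinguishing $\mathrm{OPT}_{\mathrm{SC}} \le k$ from $\mathrm{OPT}_{\mathrm{SC}} > (1-\epsilon)\ln N \cdot k$ is NP-hard.

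Given a Set Cover instance with universe $U = \{u_1,\dots,u_N\}$ and family $\mathcal{F} = \{S_1,\dots,S_m\}$, I would build a $\gamma$C$k$C instance as follows. Set $\gamma = N$, one color per element, and place $\mathcal{C} = \{p_1,\dots,p_m\}$ on the real line with $p_i = i$, so that a point of $\mathcal{C}$ corresponds to a set of $\mathcal{F}$. Declare $p_i \in \mathcal{C}_j$ iff $u_j \in S_i$, take $m_\ell = 1$ for every color, and keep the budget $k$. Under this encoding, $k$ centers cover every color within radius $0$ iff the corresponding $k$ sets cover $U$; consequently $R^* = 0$ precisely on the YES-instances of Set Cover.

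Both hardness claims then follow. Since $R^* = 0$ on YES-instances, $\rho R^* = 0$ for every $\rho > 0$, so any algorithm returning $F$ with $|F| \le k$ and $B(F,\rho R^*)$ hitting every color would decide Set Cover exactly, yielding the first bullet. For the bicriteria bullet, a YES-instance of Feige's family produces a $\gamma$C$k$C instance with a $k$-center solution at radius $0$, while a NO-instance admits no solution with even $(1-\epsilon)\ln\gamma \cdot k = (1-\epsilon)\ln N \cdot k$ centers at radius $\rho R^* = 0$; distinguishing these violates Feige's bound. The instances lie on a line by construction, establishing the third bullet.

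The main obstacle is a modeling subtlety rather than a combinatorial one: in a genuine metric space distinct points require positive pairwise distance, so either we read $\mathcal{C}_\ell \subseteq \mathcal{C}$ literally and allow several color classes to overlap at a single geometric point $p_i$ (which needs no duplication), or we introduce a tiny perturbation $\eta$ per incidence and a block gap $M$ with $2(\rho+1)\gamma\eta < M$, guaranteeing that any center within $\rho R^*$ of a color-$j$ point lies in a unique block and therefore selects a single set. Verifying that this separation survives the factor-$\rho$ blowup for every fixed $\rho$ is essentially the only bookkeeping step in the reduction.
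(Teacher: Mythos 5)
The paper does not actually prove this lemma---it is imported from \citet{anegg2020} (together with the remark that the $\ln\gamma$ form of the second bullet follows from inspecting that proof)---and your reduction from gap Set Cover (one color per universe element, one line point per set, $R^{*}=0$ on yes-instances so that any multiplicative guarantee forces an exact cover, with the block-and-gap perturbation handling the metric degeneracy) is essentially the argument underlying the cited result. Your write-up is correct in substance, up to two small points worth fixing: the $(1-\epsilon)\ln N$ inapproximability of Set Cover under $\mathrm{P}\neq\mathrm{NP}$ is due to Dinur and Steurer (Feige's original theorem assumed a slightly stronger hypothesis), and on no-instances the distinguishing test should be phrased as ``the returned $F$ covers every color at radius $0$'' rather than ``at radius $\rho R^{*}$,'' since on those instances $R^{*}$ is large and $\rho R^{*}$ could cover everything trivially.
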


\begin{remark}
Regarding the second statement in Lemma \ref{thm:anegg}, the authors of \citet{anegg2020} show a bicriteria hardness result in terms of $\log |\mathcal{C}|$ and not $\ln \gamma$. However, a closer look into their proof reveals that the claim mentioned above follows trivially. We choose to present this form of the claim because it better fits our narrative later on in the paper. 
\end{remark}

\begin{theorem}\label{thm:red}
There exists a bicriteria preserving reduction of the problematic instances of $\gamma$C$k$C described in Lemma \ref{thm:anegg} to instances of \prob{} with $|\mathcal{P}| = \gamma$. Specifically, any $(\rho, \alpha)$-bicriteria approximation for \prob{} translates to a $(\rho, \alpha)$-bicriteria approximation for the problematic instances of $\gamma$C$k$C.
\end{theorem}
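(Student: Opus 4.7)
The plan is to describe a very direct reduction: each color class becomes a single \prob{} client whose location set is exactly the points of that color. Formally, given a $\gamma$C$k$C instance on the line with point set $\mathcal{C}$, distance $d$, color classes $\mathcal{C}_1,\dots,\mathcal{C}_\gamma$, coverage requirements $m_\ell=1$, and budget $k$, I build a \prob{} instance with the same metric space $(\mathcal{C},d)$, with candidate facility set $\mathcal{S}=\mathcal{C}$, the same budget $k$, and client set $\mathcal{P}=\{p_1,\dots,p_\gamma\}$ where $S_{p_\ell}:=\mathcal{C}_\ell$ for each $\ell\in[\gamma]$. This immediately gives $|\mathcal{P}|=\gamma$ as required.

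Next I would verify that the two optimal radii coincide. For any $F\subseteq\mathcal{C}$ with $|F|\le k$ and any $R\ge0$, by definition
\[
d(S_{p_\ell},F)\le R \iff \exists\, c\in\mathcal{C}_\ell \text{ with } d(c,F)\le R \iff |B(F,R)\cap\mathcal{C}_\ell|\ge 1=m_\ell.
\]
Hence $F$ is a radius-$R$ solution to the \prob{} instance if and only if it is a radius-$R$ solution to the $\gamma$C$k$C instance. In particular, the two optima agree: $R^*_{\prob{}} = R^*_{\gamma \mathrm{C}k\mathrm{C}}=R^*$.

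The bicriteria preservation follows immediately from this equivalence. If $\mathcal{A}$ is a $(\rho,\alpha)$-bicriteria algorithm for \prob{}, then on the constructed instance it returns $F$ with $|F|\le\alpha k$ satisfying $d(S_{p_\ell},F)\le \rho R^*$ for every $\ell$. By the biconditional above, the same $F$ covers at least one point of every color within distance $\rho R^*$ in the original $\gamma$C$k$C instance, i.e., it is a $(\rho,\alpha)$-bicriteria solution there. Combined with Lemma~\ref{thm:anegg}, this transfers both the unconditional inapproximability (taking $\alpha=1$) and the $(1-\epsilon)\ln\gamma$ lower bound on the budget-blowup to \prob{}.

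I do not anticipate a substantive obstacle: the reduction is essentially definitional, and the only things to check are (i) that $m_\ell=1$ is exactly what makes a single-client encoding of a color class faithful to the ``any one location suffices'' semantics of $S_{p_\ell}$, and (ii) that allowing $\mathcal{S}=\mathcal{C}$ does not accidentally give the \prob{} algorithm more power than the $\gamma$C$k$C algorithm, which it does not since $\gamma$C$k$C already permits centers anywhere in $\mathcal{C}$. The ``problematic instances lie on a line'' clause of Lemma~\ref{thm:anegg} is inherited automatically, so the resulting hard \prob{} instances are also one-dimensional, strengthening the conclusion.
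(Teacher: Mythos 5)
Your reduction is exactly the one in the paper---same metric space, $\mathcal{S}=\mathcal{C}$, same budget, and one client per color class with $S_{p_\ell}=\mathcal{C}_\ell$---and your key observation that $d(S_{p_\ell},F)\le R$ is equivalent to $|B(F,R)\cap\mathcal{C}_\ell|\ge 1$ is the same step the paper uses. The only cosmetic difference is that you establish the full equality of the two optima, whereas the paper only needs (and only proves) $R_{OPT}\le R^*$; both arguments are correct.
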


\begin{proof}
Let $(\mathcal{C}, \mathcal{C}_1, \hdots, \mathcal{C}_\gamma, k, m_1, \hdots, m_\gamma)$ be a problematic instance of $\gamma$C$k$C as described in Lemma \ref{thm:anegg}, and recall that this instance has $m_{\ell} = 1$ for all $\ell \in [\gamma]$. We will now construct an instance of \prob{} as follows. The metric space for \prob{} will be the same as in the $\gamma$C$k$C problem. That is, we assume we have points $\mathcal{C}$ with a distance function $d$ on them. For every $\ell \in [\gamma]$ construct a client $p_{\ell}$, and set $S_{p_{\ell}} = \mathcal{C}_{\ell}$. The set of locations $\mathcal{S}$ for \prob{} where we can place facilities will be the set of locations $\mathcal{C}$ of $\gamma$C$k$C, and the value $k$ will stay the same for the two problems. 

Consider now the optimal solution $F^*$ of the $\gamma$C$k$C instance and its corresponding value $R^*$. We claim that $F^*$ is a feasible solution for the constructed \prob{} instance, and its value for that is exactly the same. This is easy to see because $|F^*| \leq k$, $|B(F^*,R^*) \cap \mathcal{C}_{\ell}| \geq 1$ for every $\ell \in [\gamma]$, and $\mathcal{C}_{\ell}$ are exactly the locations visited by client $p_\ell$. Hence if $R_{OPT}$ is the value of the optimal solution to the the constructed \prob{} instance, we have $R_{OPT} \leq R^*$.

Take now any $(\rho, \alpha)$-bicriteria solution $F$ for \prob{}. At first we trivially have $|F| \leq \alpha k$. Moreover, for every $\ell$ we can express $d(F,S_{p_\ell}) \leq \rho R_{OPT}$ (the condition guaranteed by the $(\rho, \alpha)$-bicriteria solution $F$ for \prob{}) as $|B(F,\rho R_{OPT}) \cap \mathcal{C}_{\ell}| \geq 1$. Finally, because $R_{OPT} \leq R^*$, we have $B(F,\rho R_{OPT}) \subseteq B(F,\rho R^*)$. Hence, $|B(F,\rho R^*) \cap \mathcal{C}_{\ell}| \geq |B(F,\rho R_{OPT}) \cap \mathcal{C}_{\ell}| \geq 1$ for every $\ell \in [\gamma]$. The latter concludes the bicriteria preserving reduction.
\end{proof}

\begin{corollary}\label{cor-2}
Even when the metric space is the Euclidean line, we have the following for \prob{} (unless P$=$NP):
\begin{enumerate}
    \item No approximation algorithm exists.
    \item Any bicriteria algorithm must use at least $k \ln n$ facilities.
\end{enumerate}
\end{corollary}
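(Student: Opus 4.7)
The plan is to combine Theorem~\ref{thm:red} with Lemma~\ref{thm:anegg} in a straightforward way. Theorem~\ref{thm:red} gives a bicriteria-preserving reduction from the problematic $\gamma$C$k$C instances of Lemma~\ref{thm:anegg} to \prob{} instances in which $|\mathcal{P}| = \gamma$, i.e., $n = \gamma$. Moreover, Lemma~\ref{thm:anegg} states that these problematic instances live on a line, and the reduction in Theorem~\ref{thm:red} preserves the underlying metric space. Hence the \prob{} instances produced by the reduction are also on the Euclidean line, which is exactly the setting of the corollary.

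For part~(1), I would argue by contradiction: suppose a polynomial-time $\rho$-approximation for \prob{} on the line exists for some $\rho > 0$. Viewing this as a $(\rho, 1)$-bicriteria approximation and applying Theorem~\ref{thm:red}, we would obtain a $(\rho, 1)$-bicriteria solution for the problematic $\gamma$C$k$C instance, i.e., a set $F$ with $|F|\le k$ and $|B(F,\rho R^*)\cap \mathcal{C}_\ell|\ge 1$ for every $\ell$. This directly contradicts the first item of Lemma~\ref{thm:anegg}.

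For part~(2), I would proceed analogously. Suppose there were a polynomial-time $(\rho,\alpha)$-bicriteria approximation for \prob{} on the line with $\alpha < (1-\epsilon)\ln n$ for some fixed $\epsilon \in (0,1)$ and some $\rho>0$. Applying Theorem~\ref{thm:red} with $n=\gamma$ yields a $(\rho,\alpha)$-bicriteria solution for $\gamma$C$k$C using at most $(1-\epsilon)\ln\gamma \cdot k$ facilities and radius $\rho R^*$, contradicting the second item of Lemma~\ref{thm:anegg}. Letting $\epsilon\to 0$ gives the $k\ln n$ lower bound stated in the corollary.

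The proof is almost mechanical given the two ingredients already assembled. The only real substance is observing that the reduction in Theorem~\ref{thm:red} sends $\gamma \mapsto n$ so the $\ln\gamma$ hardness threshold of Lemma~\ref{thm:anegg} translates into a $\ln n$ threshold for \prob{}, and that the line-metric property is preserved. There is no significant obstacle; the main care needed is just to keep track of which parameter plays the role of $n$ on the \prob{} side after the reduction.
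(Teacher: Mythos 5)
Your proposal is correct and is essentially the paper's own (implicit) argument: the corollary is stated as a direct consequence of combining Theorem~\ref{thm:red} with Lemma~\ref{thm:anegg}, using exactly the observations you make --- that the reduction sends $\gamma$ to $n$, that the line metric is preserved, and that the two items of the lemma yield the two parts of the corollary. No further commentary is needed.
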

\section{Algorithms}
\label{sec:bicriteria}

In this section, we introduce efficient methods which give (approximately) optimal facility placements, despite the hardness results. We also show how to extend each of our algorithms to ignore outliers, incorporate fairness constraints, and restrict the capacity of each facility.

\subsection{Fixed-Parameter Tractability}
Let $U = \bigcup_{p \in \mathcal{P}} S_p$ denote the set of all the locations visited by the set of clients and $u = |U|$ be the number of locations in this set. Due to potential privacy concerns, we can assume that the client locations we have access to only include large public areas in the county such as malls, shopping centers, etc. Hence, it is reasonable to conclude that $u$ is a fixed parameter, which we assume ranges from $15-30$. Given this fixed parameter, we develop an efficient algorithm for our problem.

The main observation here is the following: consider an instance of \prob{} and let $F^*$ be its optimal solution, whose maximum radius we denote by $R^*$. For each $p \in \mathcal{P}$, we know that $d(F^*, S_p) \leq R^*$, and hence there must exist a location $i_p \in S_p$ with $d(i_p, F^*) \leq R^*$. See now that $\{i_p ~|~ p \in \mathcal{P}\} \subseteq U$, and therefore $|\{i_p ~|~ p \in \mathcal{P}\}| \leq u$. The latter implies that we can guess, via an exhaustive search, the set $\{i_p ~|~ p \in \mathcal{P}\}$ in time at most $2^u$ (recall that since $u$ is considered a fixed parameter, $2^u$ is thought of as a small constant). Let $A$ be the correct guess for that set; we can think of $A$ as the set of locations through which the optimal solution covers every client within distance $R^*$. Given $A$, we see that the problem of computing $F^*$ reduces in a straightforward manner to the well-known $k$-supplier problem \citep{Hochbaum1985}.

In $k$-supplier we have a set of points $\mathcal{X}$ and a set of locations $\mathcal{Y}$ in a metric space with distance function $d$. The goal is to choose  $C \subseteq \mathcal{Y}$ with $|C| \leq k$, such that the maximum distance of any point in $\mathcal X$ to its closest location of $S$ is minimized. Hence, after correctly guessing $A$, we create an instance of $k$-supplier where the points are the ones in $A$, and the set of locations $\mathcal{Y}$ is $\mathcal{S}$. The previous discussion shows that $F^*$ is a solution for this $k$-supplier instance, and its maximum radius will again be $R^*$. Moreover, any $\rho$-approximate solution to the $k$-supplier instance will trivially be a $\rho$-approximate solution for \prob{}. Using the $3$-approximation algorithm from \citet{Hochbaum1985} proves the following theorem.

\begin{theorem}
Algorithm \ref{alg:fpt} yields a $3$-approximation algorithm for \prob{} and runs in time $2^u \poly(n, |\mathcal{C}|)$.
\end{theorem}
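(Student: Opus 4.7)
The plan is to follow precisely the outline suggested in the text preceding the theorem, making the guessing step and the reduction to $k$-supplier rigorous. First, I would enumerate all $2^u$ subsets $A \subseteq U$; for each such $A$, I would build a $k$-supplier instance with client set $\mathcal{X} = A$ and facility set $\mathcal{Y} = \mathcal{S}$ under the original metric $d$, solve it with the $3$-approximation of \citet{Hochbaum1985} to obtain a candidate $F_A \subseteq \mathcal{S}$ with $|F_A| \le k$, and compute the true \prob{} objective $\max_{p \in \mathcal{P}} d(S_p, F_A)$. The algorithm outputs the candidate $F_A$ achieving the smallest such value.

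For correctness, let $F^*$ be an optimal \prob{} solution with radius $R^*$. For each $p \in \mathcal{P}$, pick an arbitrary witness $i_p \in S_p$ satisfying $d(i_p, F^*) \le R^*$, and let $A^* = \{i_p : p \in \mathcal{P}\} \subseteq U$. Since $A^* \subseteq U$, the enumeration will at some point consider $A = A^*$. For this guess, $F^*$ itself is a feasible solution to the associated $k$-supplier instance achieving radius at most $R^*$, so the optimal $k$-supplier value is $\le R^*$. Applying the $3$-approximation then yields $F_{A^*}$ with $|F_{A^*}| \le k$ and $d(i_p, F_{A^*}) \le 3R^*$ for every $p$. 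The key step is then the monotonicity observation $d(S_p, F_{A^*}) \le d(i_p, F_{A^*}) \le 3R^*$, which holds because $i_p \in S_p$ implies $d(S_p, F_{A^*}) = \min_{j \in S_p} d(j, F_{A^*}) \le d(i_p, F_{A^*})$. Since our algorithm selects the candidate with the smallest \prob{} objective, its output has radius at most $3R^*$.

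For the running time, the outer loop contributes a factor $2^u$. For each guess, constructing the $k$-supplier instance and running the Hochbaum--Shmoys algorithm takes $\poly(|A|, |\mathcal{S}|) \le \poly(n, |\mathcal{C}|)$ time, and evaluating $\max_p d(S_p, F_A)$ to pick the best candidate likewise requires only $\poly(n, |\mathcal{C}|)$ time, giving the claimed $2^u \poly(n, |\mathcal{C}|)$ bound.

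I do not anticipate a serious obstacle: the main substantive point is the existence of the witness set $A^* \subseteq U$ with $|A^*| \le u$, which is exactly what makes the brute-force guess tractable, and the only care needed is to verify that the $k$-supplier guarantee on the single witness $i_p$ transfers to the whole visited set $S_p$ via the trivial inequality $d(S_p, F) \le d(i_p, F)$. Beyond that, the argument is a clean reduction using a known approximation as a black box.
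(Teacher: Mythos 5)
Your proposal is correct and follows essentially the same route as the paper: guess the witness set $A^* = \{i_p : p \in \mathcal{P}\} \subseteq U$ by exhaustive enumeration, reduce to $k$-supplier with clients $A^*$ and facilities $\mathcal{S}$, apply the Hochbaum--Shmoys $3$-approximation, and transfer the guarantee back via $d(S_p, F) \le d(i_p, F)$. The only cosmetic difference is that the paper's Algorithm~\ref{alg:fpt} restricts the enumeration to sets $A$ hitting every $S_p$, whereas you enumerate all of $2^U$ and let the final objective comparison discard useless guesses; both are valid.
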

\begin{algorithm}
\caption{\fpt{}}
\label{alg:fpt}
\begin{algorithmic}[1]
\FOR{$A\in 2^U:|A\cap S_p|\neq0,
\forall p\in \mathcal{P}$}
\STATE Obtain locations $F_A$ by running the $k$-supplier algorithm on the appropriate instance discussed above.\\
\STATE Calculate the objective value for $F_A$.
\ENDFOR
\STATE Pick the $F_A$ with the smallest objective value.
\end{algorithmic}
\end{algorithm}
Moving forward, we see that the same approach of guessing the correct set of client locations $A$ will also apply in different settings. In fact, the only thing that may differ is the need for an alternative $k$-supplier algorithm that can incorporate the specific constraints of each unique setting; we survey some of these settings below.

\textbf{Outliers:} 
To modify our algorithm so that it only considers some fraction $q$ of the population, we only need to change the objective value evaluated in line 3 of Algorithm \ref{alg:fpt}. To improve efficiency, we can also only consider guesses $A$ that contain locations from at least $\floor*{q n}$ clients since the correct guess $A$ contains locations from at least $\floor*{q n}$ clients. If we then feed $A$ to the $k$-supplier algorithm in the exact same manner before, we will get a $3$-approximation. 

\begin{corollary}
After changing the objective evaluated in line 3 to the partial objective, Algorithm \ref{alg:fpt} gives a $3$-approximation for \prob{} with outliers.
\end{corollary}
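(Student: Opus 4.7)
The plan is to mirror the argument used for the original three-approximation, but track only the clients that the optimum actually covers. Let $F^\ast$ be an optimal solution to the outliers instance and let $R^\ast$ be its partial radius, and denote by $\mathcal{P}^\ast \subseteq \mathcal{P}$ the set of $\lfloor qn \rfloor$ clients (or more) that $F^\ast$ covers within distance $R^\ast$. For each $p \in \mathcal{P}^\ast$ there exists by definition a witness location $i_p \in S_p$ with $d(i_p, F^\ast) \le R^\ast$. Set $A^\ast = \{ i_p : p \in \mathcal{P}^\ast \} \subseteq U$.

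Next I would verify that $A^\ast$ is one of the guesses enumerated by the (modified) outer loop. By construction, $A^\ast \cap S_p \ne \emptyset$ for every $p \in \mathcal{P}^\ast$, so $A^\ast$ hits at least $|\mathcal{P}^\ast| \ge \lfloor qn \rfloor$ client sets. Hence the restricted enumeration described in the paragraph preceding the corollary does consider $A^\ast$. Since $F^\ast$ covers each point of $A^\ast$ within distance $R^\ast$, it is a feasible solution of value $R^\ast$ to the $k$-supplier instance formed with point set $A^\ast$ and facility set $\mathcal{S}$. Applying the three-approximation of \citet{Hochbaum1985} to this instance therefore returns a set $F_{A^\ast}$ with $|F_{A^\ast}| \le k$ such that $d(i, F_{A^\ast}) \le 3R^\ast$ for every $i \in A^\ast$.

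To finish, I would translate this guarantee back into the partial objective used in line~3. For every $p \in \mathcal{P}^\ast$ we have $d(S_p, F_{A^\ast}) \le d(i_p, F_{A^\ast}) \le 3R^\ast$, so $F_{A^\ast}$ covers at least $|\mathcal{P}^\ast| \ge \lfloor qn \rfloor$ clients within distance $3R^\ast$. Thus the partial objective value of $F_{A^\ast}$ is at most $3R^\ast$, and since the algorithm outputs the candidate with the smallest partial objective, its final output is also within $3R^\ast$.

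The only subtle point—and what I expect to be the main thing to write carefully—is ensuring that the \emph{specific} witness set $A^\ast$ associated with the optimum survives the efficiency pruning (enumerating only guesses that meet at least $\lfloor qn \rfloor$ client sets); the rest is a direct reuse of the reduction to $k$-supplier. No capacity or fairness complications arise here, so the argument is essentially a one-line modification of the proof of the preceding theorem, with $\mathcal{P}^\ast$ playing the role that $\mathcal{P}$ played there.
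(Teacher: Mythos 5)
Your proposal is correct and follows exactly the paper's intended argument: the paper's justification is the (much terser) paragraph preceding the corollary, which restricts the enumeration to guesses hitting at least $\lfloor qn \rfloor$ client sets and reuses the $k$-supplier reduction, precisely as you do with the witness set $A^\ast$ drawn from the optimally covered clients $\mathcal{P}^\ast$. Your write-up simply makes explicit the steps the paper leaves implicit; no difference in approach.
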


\textbf{Fairness:} Although our algorithm provides an upper bound guarantee for the maximum distance to a facility, the facility placement may significantly differ between individuals, with some having a facility right next to them, while others need to travel the whole $3R^*$ guarantee. Luckily, the vaccine centers can vary from week to week or even day to day. Thus, we can use a randomized algorithm such as the one given in \citet{harris2020approximation}, to guarantee that the re-provisioning of facilities over the course of many tries will provide an improved per-point guarantee on expectation. Hence, we treat the clients stochastically fairly.

\begin{corollary}
When using the algorithm from \citet{harris2020approximation} instead of a simple $k$-supplier algorithm, Algorithm \ref{alg:fpt} is able to output a distribution $\tilde\Omega$ such that $\forall p\in \mathcal{P}$, we have $\mathbb{E}_{F\sim\tilde\Omega}[d(S_p,F)]\le (1+2/e) R^*$ and $\Pr[d(S_p,F)\le 3 R^*]=1$.
\end{corollary}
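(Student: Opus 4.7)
The plan is to combine the FPT guessing step of Algorithm~\ref{alg:fpt} with the per-point guarantee of the randomized $k$-supplier algorithm of \citet{harris2020approximation}. That algorithm, given a $k$-supplier instance with point set $\mathcal{X}$, facility set $\mathcal{Y}$, and optimal radius $R^*_{kS}$, outputs a distribution $\Omega$ over feasible solutions $F\subseteq\mathcal{Y}$ with $|F|\le k$ such that for every $x\in\mathcal{X}$ one has $\mathbb{E}_{F\sim\Omega}[d(x,F)]\le (1+2/e)\,R^*_{kS}$ and $d(x,F)\le 3R^*_{kS}$ almost surely. The strategy is to install this lottery subroutine in place of the deterministic $3$-approximation, and then argue that the per-point guarantee for elements of $\mathcal{X}$ transfers to a per-client guarantee for elements of $\mathcal{P}$ through the already established guessing reduction.

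First I would recall the structural observation behind the FPT reduction: there exists a ``correct'' guess $A^*\subseteq U$ that contains, for every client $p$, a representative location $i_p\in S_p$ with $d(i_p,F^*)\le R^*$. For the $k$-supplier instance induced by this guess (points $A^*$, facility set $\mathcal{S}$), the optimal solution $F^*$ is feasible and has cost at most $R^*$, so the $k$-supplier optimum $R^*_{kS}(A^*)$ satisfies $R^*_{kS}(A^*)\le R^*$.

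Next, I would modify line~2 of Algorithm~\ref{alg:fpt} to invoke the Harris et al.\ lottery, yielding a distribution $\Omega_A$ for each candidate $A$ enumerated in the loop. To pick between guesses, I would compute $R^*_{kS}(A)$ exactly (by trying all $O(|\mathcal{C}|^2)$ pairwise distances as thresholds and using any standard $k$-supplier feasibility subroutine), and then output $\tilde\Omega := \Omega_{A^\dagger}$ for the guess $A^\dagger$ that minimizes $R^*_{kS}(A)$. Since $A^*$ is among the enumerated candidates, we get $R^*_{kS}(A^\dagger)\le R^*_{kS}(A^*)\le R^*$.

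Finally I would transfer the two bounds from the $k$-supplier points of $A^\dagger$ to the clients of $\mathcal{P}$. The loop condition $|A\cap S_p|\ne 0$ guarantees that for every $p$ we may fix some $j_p\in A^\dagger\cap S_p$. Applying the Harris guarantees to $x=j_p$ gives $\Pr\bigl[d(j_p,F)\le 3R^*_{kS}(A^\dagger)\bigr]=1$ and $\mathbb{E}[d(j_p,F)]\le(1+2/e)\,R^*_{kS}(A^\dagger)$, and combining with $R^*_{kS}(A^\dagger)\le R^*$ and the monotonicity $d(S_p,F)\le d(j_p,F)$ yields both claimed inequalities. The main obstacle I expect is simply verifying that the expectation bound in \citet{harris2020approximation} is genuinely pointwise (not just on average over $\mathcal{X}$), since it is this pointwise form that lets us route the guarantee through an arbitrary representative $j_p\in S_p\cap A^\dagger$; once that is confirmed, the remainder of the argument is a direct application of the guessing reduction already used in Algorithm~\ref{alg:fpt}.
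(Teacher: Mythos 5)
Your overall route is the same as the paper's: the paper offers essentially no proof beyond the remark that one should drop the randomized $k$-supplier lottery of \citet{harris2020approximation} into the guessing framework of Algorithm~\ref{alg:fpt}, and your transfer of the per-point guarantees through a representative $j_p\in A\cap S_p$, using $d(S_p,F)\le d(j_p,F)$ and $R^*_{kS}(A^*)\le R^*$, is exactly the intended argument. You are also right to flag that the expectation bound in \citet{harris2020approximation} must be pointwise; it is, and that is precisely what makes the corollary go through.

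There is, however, one step that fails as written: your rule for selecting among the $2^u$ guesses asks to ``compute $R^*_{kS}(A)$ exactly \ldots using any standard $k$-supplier feasibility subroutine.'' No such polynomial-time subroutine exists unless P$=$NP: deciding whether a $k$-supplier instance admits a solution of radius at most a given threshold $R$ is NP-hard (otherwise a sweep over the $O(|\mathcal{C}|\cdot|\mathcal{S}|)$ candidate distances would compute the optimum exactly, contradicting the factor-$3$ inapproximability of $k$-supplier). The fix is standard and cheap: for each guess $A$, take the smallest candidate threshold $R$ at which the Harris et al.\ subroutine succeeds (i.e., its certified radius $\hat R(A)$), and select the guess minimizing $\hat R(A)$. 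Since the subroutine succeeds at $R = R^*_{kS}(A^*)$ for the correct guess $A^*$, the selected guess satisfies $\hat R(A^\dagger)\le R^*_{kS}(A^*)\le R^*$, and the rest of your chain of inequalities is unchanged. With that substitution your proof is complete; without it, the selection step is not implementable.
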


\textbf{Capacity:} 
In this case, we assume that each facility we use has a capacity $L$, i.e., at most $L$ clients can be assigned to it in any solution. Once again, the FPT process we described earlier suffices to solve the problem. Specifically, we can think of the set $A$ as the locations through which the clients of $\mathcal{P}$ receive their service. Hence, as we did for the regular case, we can create an instance of $k$-supplier where the points requiring service are those of $A$, but this time each location of $\mathcal{Y}$ for the $k$-supplier instance will have a capacity $L$. In other words, this will be an instance of capacitated $k$-supplier. Furthermore, the optimal solution of capacitated \prob{} will be a solution of the same value for the capacitated $k$-supplier instance. Finally, it is also trivial to see that any $\rho$-approximate solution for capacitated $k$-supplier instance, will yield a $\rho$-approximate solution to capacitated \prob{}. 

\begin{corollary}
When using the algorithm from \citet{an2013centrality} instead of a simple $k$-supplier algorithm, Algorithm \ref{alg:fpt} is an 11-approximation for capacitated \prob{}.
\end{corollary}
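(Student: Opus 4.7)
The plan is to rerun the FPT argument from the uncapacitated case with a single substitution: the $k$-supplier subroutine in Line 2 of Algorithm \ref{alg:fpt} is replaced by the 11-approximation of \citet{an2013centrality} for capacitated $k$-supplier. I would fix an optimal capacitated \prob{} solution $(F^*,\phi)$ of value $R^*$, where $\phi:\mathcal{P}\to F^*$ is a capacity-respecting assignment (at most $L$ clients per facility) with $d(S_p,\phi(p))\le R^*$ for every $p\in\mathcal{P}$. For each client $p$, I pick a witness $i_p\in S_p$ satisfying $d(i_p,\phi(p))\le R^*$, and set $A^*=\{i_p:p\in\mathcal{P}\}\subseteq U$. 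Since $|A^*|\le u$, the set $A^*$ appears among the $2^u$ guesses enumerated by Algorithm \ref{alg:fpt}.

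For each candidate $A\subseteq U$, I would build a capacitated $k$-supplier instance whose clients are the $n$ individuals in $\mathcal{P}$, each placed at some position in $A\cap S_p$; whose facility-location set is $\mathcal{S}$, with uniform hard capacity $L$ on every opened facility; and whose budget is $k$. For the correct guess $A^*$, the pair $(F^*,\phi)$ is immediately feasible for this instance with objective value at most $R^*$, so the algorithm of \citet{an2013centrality} returns $F_{A^*}\subseteq\mathcal{S}$ with $|F_{A^*}|\le k$, together with a capacity-respecting assignment $\psi:\mathcal{P}\to F_{A^*}$ satisfying $d(i_p,\psi(p))\le 11R^*$ for every $p$. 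Because $i_p\in S_p$, this implies $d(S_p,\psi(p))\le 11R^*$, which is a feasible capacitated \prob{} solution of value at most $11R^*$. Returning the best $F_A$ across all guesses yields the claimed $11$-approximation.

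The main obstacle is faithfully transferring the capacity constraint across the reduction, since each per-client location $i_p$ is not literally part of the guess $A$ and, critically, several clients may share the same location of $A$ (so ``$L$ points of $A$'' per facility differs from ``$L$ clients'' per facility). The cleanest fix is to enumerate, together with $A$, either a witness function $\sigma:\mathcal{P}\to A$ with $\sigma(p)\in S_p\cap A$ or, equivalently, the weight vector $(w_a)_{a\in A}$ given by $w_a=|\sigma^{-1}(a)|$, and to feed the resulting weighted multiset into the capacitated $k$-supplier subroutine; for the correct $A^*$ paired with the true witnesses, the constructed instance has optimum at most $R^*$ and the argument above goes through. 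The rest of the proof is essentially identical to the uncapacitated case in the excerpt, using that the subroutine of \citet{an2013centrality} preserves both the budget of $k$ facilities and the capacity of $L$ clients per facility.
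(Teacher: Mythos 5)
Your overall route is the same as the paper's: guess the witness set $A \subseteq U$, build a capacitated $k$-supplier instance, and invoke the $11$-approximation of \citet{an2013centrality}. The paper's own argument is a short paragraph that takes the demand points of that instance to be the elements of $A$, with every facility capacitated at $L$; you have correctly put your finger on the soft spot in this reduction, namely that capacities count \emph{clients} rather than locations, many clients may share a witness location, and so the multiplicities $w_a = |\{p : i_p = a\}|$ are essential yet are not determined by $A$ alone.

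However, your proposed repair does not work within the claimed resources. Enumerating a witness function $\sigma:\mathcal{P}\to A$ takes $u^{n}$ time, and enumerating the weight vector $(w_a)_{a\in A}$ (nonnegative integers summing to $n$ over at most $u$ coordinates) takes $\binom{n+u-1}{u-1}=n^{\Theta(u)}$ time; neither is of the form $2^{u}\poly(n,|\mathcal{C}|)$, so the fixed-parameter tractability on which the whole section rests is lost (the weight-vector version is at best XP in $u$). Moreover, the weight vector is not in fact equivalent to the witness function for your purposes: after the subroutine assigns the $w_a$ demand units at $a$ to facilities, you must route actual clients, and an arbitrary choice of which $w_a$ clients ``sit at'' $a$ need not satisfy $a\in S_p$; nor can you fix witnesses arbitrarily in advance, since a client's arbitrarily chosen witness may be far from the facility that serves it in the optimum. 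What is really needed is either (i) a version of capacitated $k$-supplier in which each client comes with a set $S_p\cap A$ of admissible positions, together with a verification that the analysis of \citet{an2013centrality} tolerates the weakened triangle inequality this induces (two clients served by a common facility within radius $R$ are now only within $2R$ of one another through their witnesses), or (ii) some other mechanism for recovering the multiplicities. As written, both your argument and the paper's one-paragraph sketch leave this step open.
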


\subsection{Covering Algorithm} 

In Corollary \ref{cor-2}, we show that any bicriteria algorithm needs to open at least $k\ln n$ facilities in order to give a bounded approximation guarantee. Here, we show that this is essentially tight: we give an algorithm that outputs a set of locations of size at most $k(\ln n + 1)$, while guaranteeing that the objective value is at most that of an optimal solution. 

Consider the related problem, which we call \cover{}, in which instead of optimizing the radius $R$ given a budget $k$, we are given a target radius $R$ and want to choose a set $F\subseteq \mathcal{S}$ which minimizes $|F|$ and guarantees that $d(S_p,F)\le R$ for each $p\in\mathcal{P}$. Notice that this is just a standard Set Cover problem, where the sets are $\{p\in \mathcal{P}: d(S_p, j)\le R\}$ for each $j\in \mathcal{S}$ and the universe consists of the clients $\mathcal{P}$. Using a known greedy algorithm for Set Cover \citep{approx}, we have an $H_n$-approximation algorithm for \cover{}, where $H_n\le \ln{n}+1$ is the $n$-th harmonic number. 

For generality, we will show how any $\alpha$-approximation algorithm for Set Cover yields an $(1,\alpha)$-bicriteria algorithm for \prob{} via a reduction to \cover{}. First, note that the optimal radius $R^*$ for an instance of \prob{} is always the distance between some $j \in \mathcal{C}$ and some $i \in \mathcal{S}$. Hence, there are at most polynomially many options for it, specifically $|\mathcal{C}|\cdot |\mathcal{S}|$. For each such option $R$, we create the corresponding instance of \cover{} and run the set cover algorithm on it. The final guarantees follow from the iteration when $R=R^*$. Observe at this point that we can speed up the whole process by performing a binary search in order to find $R^*$, and thus avoid the previously described exhaustive search.

\begin{algorithm}[H]
\caption{\cover{} Search}
\label{alg:algorithm}
\begin{algorithmic}[1]
\STATE Binary search on the sorted list $\{d(i,j) : j \in \mathcal{C}, i \in \mathcal{S}\}$, and let the current guess be $R$:
\STATE\quad Use $R$ to create the proper instance of \cover{}. 
\STATE\quad Obtain $\alpha$-approximate solution $F_R$ for that instance.
\STATE\quad If $|F_R| > \alpha \cdot k$, increase $R$; else, decrease $R$.
\STATE Output $F_R$ for the minimum $R$ such that $F_R \leq \alpha \cdot k$.
\end{algorithmic}
\end{algorithm}

\begin{theorem}\label{thm:bicriteria}
Given an $\alpha$-approximation algorithm for set cover, Algorithm \ref{alg:algorithm} gives an $(1, \alpha)$-bicriteria algorithm for \prob{}.
\end{theorem}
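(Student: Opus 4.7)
The plan is to argue that the target radius $R^*$ appears in the sorted list over which the algorithm searches, that the \cover{} instance built at $R^*$ admits a Set Cover solution of size at most $k$ (so the $\alpha$-approximation returns one of size at most $\alpha k$), and that the minimum radius $R'$ in the list for which $|F_{R'}| \le \alpha k$ satisfies $R' \le R^*$. Combining these yields the $(1,\alpha)$-bicriteria guarantee.

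First I would observe that the optimal value $R^*$ of the \prob{} instance equals $d(i^*, j^*)$ for some $i^* \in \mathcal{S}$ (a chosen facility in the optimum) and some $j^* \in \bigcup_{p \in \mathcal{P}} S_p \subseteq \mathcal{C}$ (a visited location that realizes the worst distance), because the objective is a maximum of such pairwise distances. Consequently $R^*$ is one of the values enumerated by the algorithm in the sorted list $\{d(i,j) : j \in \mathcal{C},\ i \in \mathcal{S}\}$.

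Next I would show the reduction is tight at $R^*$. Let $F^*$ be an optimal \prob{} solution; by definition, for every $p \in \mathcal{P}$ there exists $j \in F^*$ with $d(S_p, j) \le R^*$, so $p$ belongs to the set $\{p' \in \mathcal{P} : d(S_{p'}, j) \le R^*\}$ of the \cover{} instance built at $R^*$. Hence $F^*$ is a Set Cover of size at most $k$ for that instance, and applying the $\alpha$-approximation yields $F_{R^*}$ with $|F_{R^*}| \le \alpha k$. Because the algorithm selects the minimum $R'$ satisfying $|F_{R'}| \le \alpha k$, we must have $R' \le R^*$; and since $F_{R'}$ covers every client of the \cover{} instance at $R'$, we obtain $d(S_p, F_{R'}) \le R' \le R^*$ for every $p \in \mathcal{P}$, proving the $(1,\alpha)$-bicriteria bound.

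The main obstacle is justifying the binary search step, since the output size $|F_R|$ of a generic $\alpha$-approximation is not guaranteed to be monotone in $R$, even though the true optimal Set Cover size is non-increasing in $R$. Strictly speaking, the argument above certifies only that the predicate $|F_R| \le \alpha k$ holds at $R = R^*$, not that the predicate is monotone along the sorted list. The cleanest way to handle this is to replace the binary search with a linear sweep over the $|\mathcal{C}|\cdot|\mathcal{S}|$ candidate radii and output $F_{R'}$ for the smallest $R'$ at which the size check passes; this affects only the running time, which remains polynomial, and preserves the approximation guarantee exactly as stated.
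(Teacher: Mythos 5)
Your proposal is correct and follows essentially the same route as the paper: observe that $R^*$ lies in the enumerated list, that the optimal facility set $F^*$ is a set cover of size at most $k$ for the \cover{} instance built at $R^*$, and hence that the $\alpha$-approximation certifies $|F_{R^*}|\le\alpha k$, so the radius ultimately output is at most $R^*$. Your worry about the binary search is understandable but unnecessary: monotonicity of $|F_R|$ is not needed, only the one-sided guarantee that the check $|F_R|\le\alpha k$ passes for \emph{every} $R\ge R^*$; consequently a failed check certifies $R<R^*$, so the search never discards the part of the list containing $R^*$ before some tested value $R\le R^*$ passes the check, which is exactly the case analysis the paper's proof performs. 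Your linear-sweep variant is also valid, just slower.
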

\begin{proof}

Let $R^*$ be the objective value for the optimal solution $F^* \subseteq \mathcal{S}$ of \prob{}, where $|F^*| \leq k$. We wish to show that \cover{} Search finds a radius $R$ in the list such that $|F_R|\le \alpha\cdot k$ and $R\le R^*$. Consider an iteration of the binary search where the radius guess is $R$. Suppose $R \geq R^*$; then there must exist a solution of \cover{} of size at most $k$. The $\alpha$-approximation algorithm will therefore output a set $F_R$ with $F_R\le\alpha\cdot k$ and $R$ will decrease. If $R < R^*$, then we either find a solution with $F_R\le\alpha\cdot k$, or we increase $R$ and move closer to $R^*$. Finally, since $R^*$ is in the list $\{d(i,j) : j \in \mathcal{C}, i \in \mathcal{S}\}$, the binary search necessarily finds some $R\le R^*$ with $F_R\le\alpha\cdot k$.
\end{proof}

As in the case of our \fpt{} algorithm, we can easily extend Algorithm \ref{alg:algorithm} in order to accommodate different settings. The only difference here lies at step 3, where instead of a classic Set Cover algorithm we can run a different algorithm.

\textbf{Outliers:} In order to modify our algorithm to only consider some fraction $q\in(0,1)$ of the population, we can use some $\alpha$-approximation algorithm for the Partial Set Cover problem, where the goal is to cover at least a $q$-fraction of the universe elements. Hence, we naturally consider a variant of \cover{}, which we call Partial \cover{}, that requires only $\floor*{qn}$ points to be covered by balls of radius $R^*$. Trivially, Partial \cover{} is a special case of Partial Set Cover. Then the approach we described previously remains the same: we can guess the optimal radius $R^*$ and obtain an $\alpha$-approximate solution $F_{R^*}$ for the corresponding Partial \cover{} instance. This solution will be optimal for \prob{} with outliers, while placing at most $\alpha k$ facilities. In particular, we have the following corollary of Theorem \ref{thm:bicriteria}.

\begin{corollary}
When using the greedy algorithm for Partial Set Cover \citep{approx}, Algorithm \ref{alg:algorithm} gives a $(1,H_{\lfloor qn\rfloor})$-bicriteria algorithm for \prob{} with outliers.
\end{corollary}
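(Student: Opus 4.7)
The plan is to imitate the proof of Theorem \ref{thm:bicriteria} almost verbatim, with the Set Cover subroutine replaced by Partial Set Cover and with the target coverage set to $\lfloor qn\rfloor$ clients. First I would set up the reduction: for a candidate radius $R$, form the Partial \cover{} instance whose universe is $\mathcal{P}$, whose sets are $T_j = \{p \in \mathcal{P} : d(S_p, j) \leq R\}$ for each $j \in \mathcal{S}$, and whose coverage requirement is $\lfloor qn\rfloor$. By definition, a feasible solution of size $t$ to this instance is exactly a set $F \subseteq \mathcal{S}$ with $|F|=t$ and $|\{p : d(S_p, F)\leq R\}|\geq \lfloor qn\rfloor$, i.e., a feasible solution to \prob{} with outliers at radius $R$.

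Next I would invoke the known bound for greedy Partial Set Cover: when the coverage requirement is $t$ elements, the greedy algorithm produces a cover of size at most $H_t \cdot \mathrm{OPT}$, where $\mathrm{OPT}$ is the optimum cover size for that instance (see \citet{approx}). Plugging in $t=\lfloor qn\rfloor$, we get an $H_{\lfloor qn\rfloor}$-approximation guarantee at step 3 of Algorithm \ref{alg:algorithm}.

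Now I would rerun the binary search argument of Theorem \ref{thm:bicriteria} with $\alpha = H_{\lfloor qn\rfloor}$. Let $F^*$ be an optimal solution for \prob{} with outliers, with radius $R^*$ and $|F^*|\leq k$. Whenever the binary search probes $R\geq R^*$, the set $F^*$ witnesses that the corresponding Partial \cover{} instance has optimum at most $k$, so the greedy returns $F_R$ with $|F_R|\leq H_{\lfloor qn\rfloor} k$, causing $R$ to decrease; whenever $R<R^*$, either $|F_R|\leq H_{\lfloor qn\rfloor} k$ already (which is fine) or $R$ increases toward $R^*$. Since $R^*$ is one of the $|\mathcal{C}|\cdot|\mathcal{S}|$ candidate pairwise distances in the sorted list, the binary search terminates at some $R\leq R^*$ with $|F_R|\leq H_{\lfloor qn\rfloor} k$ and covering at least $\lfloor qn\rfloor$ clients within distance $R\leq R^*$, giving the claimed $(1, H_{\lfloor qn\rfloor})$-bicriteria guarantee.

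The only nontrivial ingredient is the $H_{\lfloor qn\rfloor}$-approximation for greedy Partial Set Cover; everything else is a direct substitution into the existing framework. I would simply cite the standard analysis (the usual charging argument where each newly covered element is charged $1/|\text{new elements covered this step}|$, summed against the remaining $\lfloor qn\rfloor$ uncovered elements, giving an $H_{\lfloor qn\rfloor}$ factor) rather than reproving it, since it follows the same pattern as the classical Set Cover analysis.
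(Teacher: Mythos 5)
Your proposal is correct and matches the paper's approach: the paper also treats this as a direct corollary of Theorem \ref{thm:bicriteria}, obtained by observing that Partial \cover{} is a special case of Partial Set Cover and substituting the greedy Partial Set Cover algorithm (with its $H_{\lfloor qn\rfloor}$ guarantee) for the Set Cover subroutine in Algorithm \ref{alg:algorithm}. Your write-up is in fact more explicit than the paper's, which leaves the binary-search argument implicit.
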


\textbf{Fairness:} 
When solving \prob{} with outliers, the algorithm may view some demographic groups as outliers more often than others. To mitigate such possibilities, we can use an algorithm for the Partition Set Cover problem \citep{inamdar2018partition} to guarantee that a large proportion of each demographic group gets coverage. For example, we can guarantee that the algorithm considers a proportional number of people from each (demographic) group when choosing the vaccine center locations. The following approximation guarantee will then follow directly from \citet{inamdar2018partition} and the outliers reduction before. We remark that even though the word ``partition" is in the name of the problem, the results of \citet{inamdar2018partition} extend to the case of overlapping demographic classes.

\begin{corollary}
Let $C_t\subseteq\mathcal{P}$ for $t\in[r]$ be (not necessarily disjoint) demographic classes and let $0\le p_t\le |C_t|$ be the coverage requirements for each class. Using the algorithm of \citet{inamdar2018partition} at step 3, Algorithm 2 gives a $(1,O(\log n)+\log{r})$-bicriteria algorithm while satisfying the coverage constraints.
\end{corollary}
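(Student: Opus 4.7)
The plan is to mirror the reduction used for \prob{} with outliers, replacing Partial Set Cover by Partition Set Cover. Fix a candidate radius $R$ and build a Partition Set Cover instance whose universe is $\mathcal{P}$, whose sets are $\{p \in \mathcal{P} : d(S_p, j) \le R\}$ indexed by $j \in \mathcal{S}$, and whose demographic classes $C_1,\ldots,C_r$ together with coverage requirements $p_1,\ldots,p_r$ are copied verbatim from the \prob{} input. There is an obvious bijection between facility sets $F \subseteq \mathcal{S}$ that cover at least $p_t$ clients of class $C_t$ within radius $R$ and feasible Partition Set Cover solutions of the same cardinality, since $d(S_p, F) \le R$ is exactly the condition that $p$ lies in some set picked by $F$.

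Next, I would plug this reduction into the binary search of Algorithm \ref{alg:algorithm}. When the guess $R$ equals the optimal radius $R^*$, the optimal \prob{} facility set $F^*$ of size at most $k$ certifies a feasible Partition Set Cover solution of size at most $k$. Running the $(O(\log n) + \log r)$-approximation of \citet{inamdar2018partition} on this instance therefore returns a cover $F_{R^*}$ of size at most $(O(\log n) + \log r)\cdot k$ that satisfies every class-coverage constraint. Because $R^*$ lies in the sorted list $\{d(i,j) : j \in \mathcal{C},\, i \in \mathcal{S}\}$, the binary search terminates at some $R \le R^*$ where the returned set $F_R$ is feasible for the class constraints and has $|F_R| \le (O(\log n) + \log r)\cdot k$, which is exactly the claimed $(1,\, O(\log n) + \log r)$-bicriteria guarantee.

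The one step warranting extra care is that the statement permits the classes $C_t$ to overlap, whereas the original \citet{inamdar2018partition} write-up phrases the problem as a partition; this is precisely the remark preceding the corollary. To make the reduction rigorous I would either (a) appeal to the fact that their LP-rounding / greedy analysis handles each class's coverage constraint independently and never uses disjointness, or (b) apply a standard duplication reduction: replace every client $p$ by a copy $(p,t)$ for each class $t$ with $p \in C_t$, and extend every set $\{p : d(S_p, j) \le R\}$ to contain the corresponding copies. This yields a genuine partition instance on a universe of size at most $nr$, and the resulting approximation factor $O(\log(nr) + \log r) = O(\log n) + \log r$ matches the claim up to constants absorbed in the $O(\cdot)$. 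This is the main technical point; the rest is a direct rerun of the proof of Theorem \ref{thm:bicriteria}.
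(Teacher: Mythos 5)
Your proposal is correct and follows essentially the same route as the paper, which simply invokes the outliers-style reduction to a covering instance together with the $(O(\log n)+\log r)$-approximation of \citet{inamdar2018partition}, noting (as you do) that the overlapping-classes issue is benign. Your extra care on that last point---via the duplication argument or by observing that the analysis never uses disjointness---is exactly the content of the paper's preceding remark, so nothing in your argument diverges from the intended proof.
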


\textbf{Capacity:} As before, we assume that each facility we use has capacity $L$. We see that our general framework is still applicable: we can modify our algorithm to satisfy these capacity constraints by replacing the Set Cover algorithm with a Capacitated Set Cover algorithm when solving the \cover{} problem. In fact, \citet{wolsey1981analysis} shows that a greedy algorithm (similar to the one for Set Cover) still gives a $\log{n}+1$ approximation algorithm in this more general case.

\begin{corollary}
When using the greedy algorithm given in \citet{wolsey1981analysis} for Capacitated Set Cover, Algorithm 2 gives a $(1,\log{n}+1)$-bicriteria algorithm while satisfying the capacity constraints.
\end{corollary}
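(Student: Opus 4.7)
The plan is to mimic the proof of Theorem \ref{thm:bicriteria} verbatim, replacing the Set Cover subroutine at step 3 of Algorithm \ref{alg:algorithm} with the capacitated greedy of \citet{wolsey1981analysis}. First I would define a capacitated version of \cover{}: given a target radius $R$ and uniform capacity $L$, find a minimum-size $F\subseteq\mathcal{S}$ together with an assignment $\sigma:\mathcal{P}\to F$ satisfying $d(S_p,\sigma(p))\le R$ for every $p\in\mathcal{P}$ and $|\sigma^{-1}(j)|\le L$ for every $j\in F$. This is the natural relaxation of Capacitated \prob{} in which the budget $k$ is dropped and $R$ is fixed.

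Next I would reduce Capacitated \cover{} to Capacitated Set Cover. Take universe $\mathcal{P}$, and for each location $j\in\mathcal{S}$ create the set $T_j=\{p\in\mathcal{P}:d(S_p,j)\le R\}$ with capacity $L$ (i.e., choosing $T_j$ commits to covering at most $L$ of its elements). A feasible Capacitated Set Cover solution of size $s$ yields a Capacitated \cover{} solution of the same size: the chosen indices form $F$, and the partition of $\mathcal{P}$ induced by the capacitated cover is exactly the assignment $\sigma$. Conversely, any feasible $(F,\sigma)$ for Capacitated \cover{} gives a valid capacitated cover of size $|F|$ by choosing the sets $\{T_j\}_{j\in F}$ and covering $\sigma^{-1}(j)\subseteq T_j$ from $T_j$. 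Applying the greedy of \citet{wolsey1981analysis} to this instance therefore produces a Capacitated \cover{} solution of size at most $H_n\le \log n+1$ times the optimum.

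Finally, the bicriteria guarantee follows by the same binary-search argument as in the proof of Theorem \ref{thm:bicriteria}. Let $R^*$ be the optimal radius for Capacitated \prob{}, witnessed by some $F^*$ with $|F^*|\le k$ and feasible assignment $\sigma^*$. For any candidate $R\ge R^*$ in the sorted list $\{d(i,j):j\in\mathcal{C},i\in\mathcal{S}\}$, the pair $(F^*,\sigma^*)$ is a feasible Capacitated \cover{} solution of size $\le k$, so the greedy returns $F_R$ with $|F_R|\le (\log n+1)k$; conversely, for $R<R^*$ the binary search increases $R$ as soon as the bound is violated. Because $R^*$ itself lies in the list, the search terminates at some $R\le R^*$ with $|F_R|\le (\log n+1)k$, giving the desired $(1,\log n+1)$-bicriteria solution. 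The only real subtlety, which I expect to be the main point to nail down in a careful write-up, is checking that the formulation of Capacitated Set Cover analyzed by \citet{wolsey1981analysis} matches ours exactly, so that each location $j\in\mathcal{S}$ is picked at most once and the per-facility capacity $L$ is respected by the assignment extracted from the cover.
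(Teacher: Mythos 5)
Your proposal matches the paper's argument: the paper likewise obtains this corollary by swapping the Set Cover subroutine in Algorithm 2 for Wolsey's capacitated greedy and invoking the same binary-search reduction as in Theorem \ref{thm:bicriteria}. Your write-up simply makes explicit the definition of Capacitated \cover{} and the equivalence with Capacitated Set Cover, which the paper leaves implicit.
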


\textbf{Budget:} In the previous algorithms which solve \cover{} as a subroutine, we violate the budget constraint $k$ by a non-trivial multiplicative factor, which is a practical consideration that needs to be addressed. Luckily, it has been shown that the greedy algorithm and other heuristics for Set Cover have very small approximation ratios in practice \citep{lan2007an}. In fact, many real-life instances of Set Cover are solved optimally or near optimally by the greedy algorithm \citep{GROSSMAN199781}. Given this empirical result (which we also validate for our instances of the Set Cover problem), we get $\alpha=1$ in our experiments when running \cover{} Search. In particular, if we solve the \cover{} problem using a commercial mixed-integer linear program (MILP) solver \citep{gurobi,ortools}, we can solve the original problem to optimality. 
We emphasize that this is a non-trivial contribution: directly formulating \prob{} as an MILP requires $\Theta(n^3)$ constraints, and we cannot even initialize the solver using or-tools. In contrast, the Set Cover MILP only has $\Theta(n)$ constraints, which can be solved efficiently using commercial solvers. Hence, when using an MILP solver to solve \cover{}, Algorithm 2 yields a practical solution for solving \prob{} optimally.

\begin{table*}[t]
    \centering
    \caption{Network Information}
    \begin{tabular}{c|cccccc}
    \toprule
     &  & Activity & Residential & Maximum & Measured \\
     & Clients  & Locations & Locations & Activity & Diameter (km) \\
    \midrule
    Charlottesville City & 33156 & 5660 & 10038 & 9952 &  8.12\\
    Albemarle County & 74253 & 9619 & 32981 & 24506& 61.62\\
    \bottomrule
    \end{tabular}
    \label{tab:my_label}
\end{table*}
\section{Experiments}

\subsection{Experimental Setup}

\textbf{Data}: We run our experiments on the mobility data from Charlottesville City and Albemarle County in Virginia. For these counties, we use synthetic data constructed from the 2019 U.S. population pipeline (see \citet{chen2021, machi2021} for details). This dataset was constructed by tracking the week-long activity of county residents. Each resident is represented by a sequence of activities, where each activity is described by duration, type, and location in the county. The locations are given in geodetic coordinates and are categorized as either a residential or activity (non-residential) location. From this dataset, we can extract the locations visited by individuals residing in the county and set all activity locations as potential facility placements. A summary of the dataset is given in Table \ref{tab:my_label}.

\textbf{Baselines}: We compare our algorithms with two heuristics: \homecenter{} and \pop{}. In \pop{}, we open vaccination centers at the $k$ most visited locations. We set \pop{} as the baseline because it is related to the current heuristic used by the Virginia Department of Health. In \homecenter{}, we run k-supplier to place facilities at locations that minimize the maximum distance from client homes. We compare with this baseline in order to show the importance of considering mobility when placing the vaccination centers.

\textbf{Objective}: Recall that our objective is to minimize the maximum distance any client needs to deviate from their path to reach some facility. Since our location data is given in the geographic coordinate system, we approximate the Earth as a sphere and use geodesic distance as our metric. In Section 6.2, we notice that there is a sharp drop in the objective value if we only consider $99\%$ of the population. As a result, we also evaluate the objective value of our algorithms when $5\%$ of the people are considered outliers.

\textbf{FPT details:} When using \fpt{} in our experiments, we pick $u=15$ locations that cover the largest portion of the population (as given by the greedy algorithm for the Maximum Coverage problem). We then run \fpt{} using only knowledge of these $u$ locations. The locations chosen are all popular public activity locations, so we have a limited amount of privacy violation. As a result, the performance of \fpt{} is weaker on the full objective, but remains strong on partial coverage (the outliers formulation). It is important to note that even though we limit the knowledge of client-visited locations, \fpt{} can still choose to place facilities at any activity location in the dataset. For more details on our implementation of FPT and the  experiments, see our GitHub\footnote{\href{https://github.com/Ann924/MobileFacility}{https://github.com/Ann924/MobileFacility}}.

\subsection{Client Coverage Performance Analysis}

First, we directly compare the performances between our algorithms and the baselines. Because our objective value is defined by the maximum distance any client must travel to reach their closest facility, it does not yield insight into the distribution of travel distances. For this reason, we also assess how large the radius around our placed facilities must be to cover proportion $p$ of the clients, for $p\in[0.8,1.0]$. 

As seen in Figure \ref{fig:percentile_cville}, facility placements from \homecenter{} and \cover{} result in better full objectives while facility placements from \pop{} and \fpt{} result in better partial objectives. This has a simple explanation: the former two algorithms are forced to consider outliers since they directly optimize the objective while the latter two optimize over only a portion of the population. As a result, it makes sense to compare \homecenter{} with \cover{} and \fpt{} with \pop{}. We note that if we instead used the outliers version of \homecenter{} and \cover{}, we may not see this disparity.

\begin{figure}[h]
    \centering
    \resizebox{0.9\columnwidth}{!}{\includegraphics{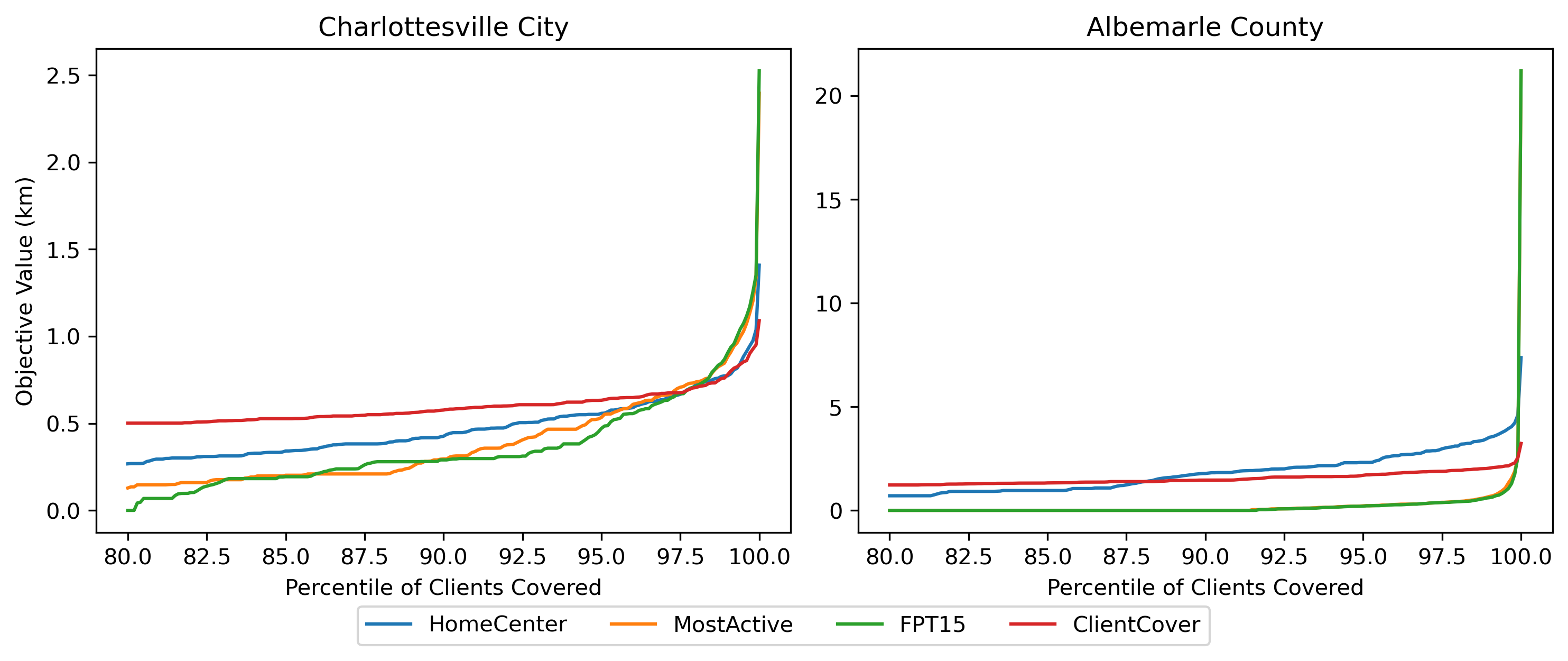}}
    \caption{Charlottesville k = 10, Albemarle k = 20}
    \label{fig:percentile_cville}
\end{figure}
\subsection{Tradeoff between Radius and Budget}

In addition to evaluating the performance of our algorithms at the current budget, it is important to evaluate the sensitivity of our algorithms to an increase in budget. That is, we want to know how much the objective value would decrease if the county allocated more resources to deploy a greater number of mobile facilities. This knowledge can influence policy decisions: when an increase budget yields a sharp decrease in objective (rather than a small decrease), the government has more incentive to fund another vaccination center.

As seen in Figure \ref{fig:tradeoff}, there is generally a sharp decrease in the objective value when the budget is less than 6 for Charlottesville and 9 for Albemarle. As the budget increases past those thresholds, the marginal returns become so diminished that increasing the budget hardly changes the objective value. This is especially prominent in the full objective performance of \fpt{} and \pop{}. Hence, it is natural to recommend budgets of 6 and 9 to the Charlottesville and Albemarle government, respectively.

Additionally, we wish to bring attention to the overall poor performance of \homecenter{} in these experiments. It is consistently outperformed by \cover{} for the full objective and is outperformed by every algorithm when evaluating the partial objective. Furthermore, \homecenter{} does not exhibit strong budget sensitivity when assessing the 95\% objective. On Albemarle, its performance plateaus around an objective value of 1.5km, which is more than 3 times larger than the objective of the algorithms that consider mobility patterns.

A seemingly weird result from the experiment is the tradeoff curve for \homecenter{}. Though there is a general downward trend in the objective value as the budget increases, there are cases in each county where increasing the budget results in an increase in the objective value. This contradictory phenomenon is caused by the limited correlation between the distance to homes and our objective; as a result, noise/luck has a considerable effect. The noisiness of \homecenter{} emphasizes the importance of our work of modeling mobile populations. 

\begin{figure}[h]
    \centering
    \resizebox{0.9\columnwidth}{!}{\includegraphics[]{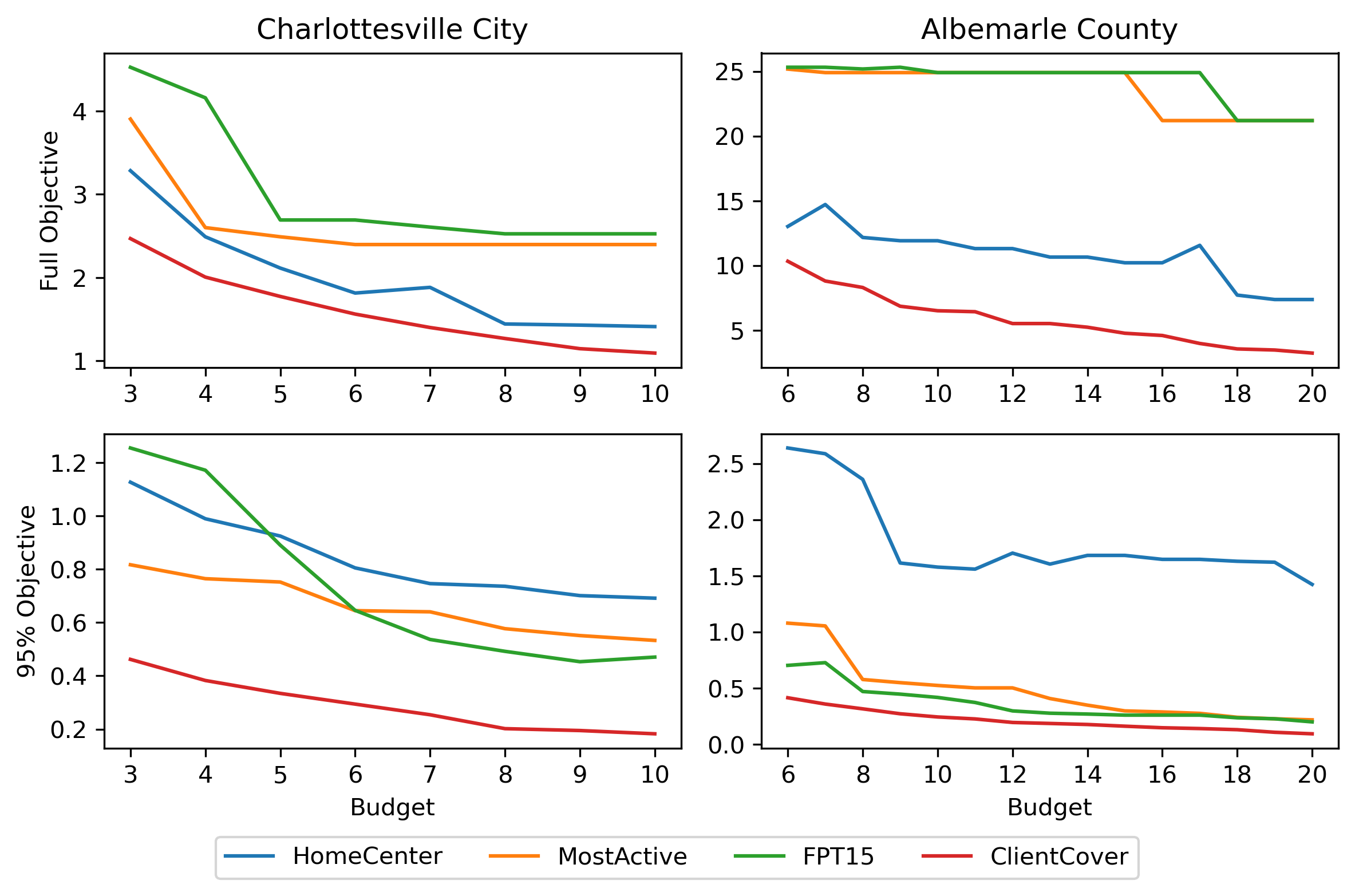}}
    \caption{Tradeoff between maximum distance needed to travel and the number of vaccine centers placed. In the 95\% objective plots, \cover{} and \homecenter{} are run with the outliers formulation.}
    \label{fig:tradeoff}
\end{figure}

\subsection{A Kernel Property}

Through our experiments, we notice a nice (empirical) property of the vaccine center locations selected by some of our algorithms. Imagine a case where we (the government) have the funds to place five mobile vaccine centers and we use our algorithms to pick the five locations to place them. Then, after two weeks, the government decides that the disease is causing too much economic devastation and, in turn, funds three more mobile vaccine centers. When we ask our algorithms to place the eight vaccine centers (approximately) optimally, it turns out that the eight chosen locations will often contain the original five chosen locations as a subset. The original five locations are then called a kernel.

\begin{table}[H]
\centering
\parbox{.45\linewidth}{
    \centering
    \caption{Charlottesville Kernel}
    \begin{tabular}{c| c c c c c}
    \toprule
          & \textsc{Home} &  & \textsc{Client}\\
          & \textsc{Center} & \fpt{} & \textsc{Cover}\\
          \midrule
         3 $\rightarrow$ 4  & 1& 1& 2\\
         4 $\rightarrow$ 5  & 1& 1& 4\\
         5 $\rightarrow$ 6  & 2& 1& 5\\
         6 $\rightarrow$ 7  & 3& 2& 6\\
         7 $\rightarrow$ 8  & 0& 1& 6\\
         8 $\rightarrow$ 9  & 0& 1& 5\\
         9 $\rightarrow$ 10  & 3& 1& 7\\
    \bottomrule
    \end{tabular}
    \label{tab: cville_kernel}}
    \hfill
    \parbox{.45\linewidth}{
    \centering
    \caption{Albemarle Kernel}
    \begin{tabular}{c| c c c c}
    \toprule
          & \textsc{Home} &  & \textsc{Client}\\
         & \textsc{Center} & \fpt{} & \textsc{Cover}\\
         \midrule
        6 $\rightarrow$ 7&3&1&5\\
        7 $\rightarrow$ 8&1&2&6\\
        8 $\rightarrow$ 9&1&2&7\\
        9 $\rightarrow$ 10&1&1&8\\
        10 $\rightarrow$ 11&4&3&5\\
        11 $\rightarrow$ 12&1&1&9\\
        12 $\rightarrow$ 13&1&2&4\\
        13 $\rightarrow$ 14&0&1&7\\
        14 $\rightarrow$ 15&5&2&7\\
        15 $\rightarrow$ 16&1&1&8\\
        16 $\rightarrow$ 17&1&0&12\\
        17 $\rightarrow$ 18&7&0&8\\
        18 $\rightarrow$ 19&2&1&7\\
        19 $\rightarrow$ 20&0&0&8\\
    \bottomrule
    \end{tabular}
    \label{tab: albe_kernel}}
\end{table}

The ideal kernel property occurs when every set of chosen facilities of size $k$ contains the set of chosen facilities for budget $k-1$. In order to determine the presence of a kernel for each of our algorithms, we calculate the number of facilities chosen with budget $k-1$ that are not also chosen with budget $k$. These values populate Tables \ref{tab: cville_kernel} and \ref{tab: albe_kernel}, where the leftmost column denotes the budgets compared. By definition, \pop{} has the kernel property since it is a greedy algorithm. Our \fpt{} algorithm also (approximately) satisfies the kernel property while maintaining a stronger performance than the baseline. The remaining two algorithms do not exhibit the property: both \homecenter{} and \cover{} pick (almost) completely different locations upon increasing the budget. Because they require less relocation, \pop{} and \fpt{} have advantageous properties when the budget is adaptive and vaccine distribution is time-consuming.

We recognize that this is not necessarily applicable to our experimental setting, COVID-19, since transportation of vaccines is (relatively) easy in Virginia. However, for the Ebola outbreak in 2014, the kernel property was recognized as an important property to have since vaccine distribution was a much more costly process. Furthermore, we note that our algorithms are not explicitly designed to have this property; it is only empirically verified.

\subsection{Information Constraints}\label{sec:infconst}
In our previous experiments with \cover{}, we assumed that we had full knowledge of the locations each person visited throughout a day. Next, we wish to understand how fine-grained this data needs to be in order for \cover{} to outperform our other algorithms; this also addresses privacy concerns raised when using the exact mobility data of individuals. In order to model loss of detailed movement patterns, we cluster the locations within a given radius $r$ together and apply \cover{} on the resulting cluster centers. The details of the clustering algorithm can be found in our code, but the general idea is to define each location to be a potential cluster center and then use the greedy set cover algorithm to pick a minimum set of clusters centers that cover all original locations with radius $r$. Using this general method for both Charlottesville and Albemarle, we vary the radius $r$ between 100 and 600 meters to see how much privacy \cover{} can give while still maintaining a superior performance over the baselines.

\begin{figure}[h]
    \centering
    \includegraphics[width =0.7\columnwidth]{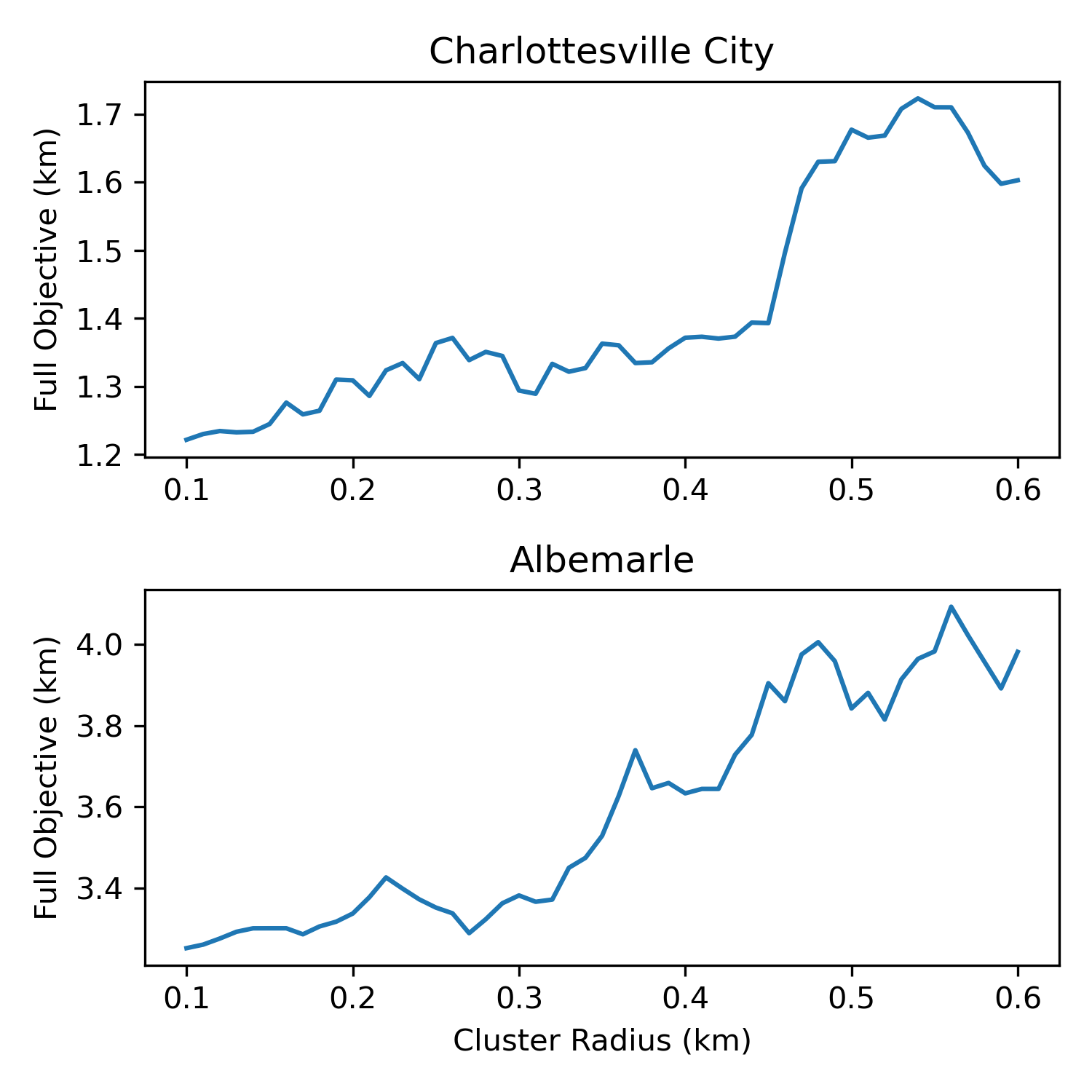}
    \caption{\cover{} performance under loss of information (k=20)}
    \label{fig: cluster_exp_cville}
\end{figure}

As we see in Figure \ref{fig: cluster_exp_cville}, clustering with a radius of 0.1--0.48 km on Charlottesville leads to a gradual increase in the objective value. At a clustering radius of 0.48 km, there is a sharp increase in the objective value from the facility placements. Even so, the resulting objective value is significantly smaller than 2.39 km, as obtained by \pop{}, and 2.52 km, as obtained for \fpt{}. Furthermore, even by clustering the data with a radius that is 45\% of the original \cover{} objective, we can still perform better than the \homecenter{} baseline. A similar trend occurs for Albemarle where the change in the solution value is relatively small when clustering from 0.10--0.35 km but grows rapidly after 0.35 km. We conclude that even when giving some privacy to individuals, \cover{} still performs much better than \fpt{}, \pop{}, and \homecenter{}.

\section{Conclusion}

Here, we introduce a generalization of the classical $k$-supplier problem where we consider the mobility of populations when placing facilities. We show that designing an approximation algorithm for this variant is NP-Hard, so we turn to fixed-parameter tractability and bicriteria approximation algorithms to get around our hardness result. Finally, we experimentally show the efficacy of our algorithms in comparison to natural baselines. Since we have demonstrated the importance of modeling mobile populations, a natural next step is to extend other variants of the facility location problem to this setting as well.

\textbf{Acknowledgements:}
We express our sincere thanks to the AAMAS referees for suggesting the experiments in Section~\ref{sec:infconst} and the extension of capacity constraints. We also thank members of the Biocomplexity COVID-19 Response Team and the Network Systems Science and Advanced Computing (NSSAC) Division for their thoughtful comments and suggestions related to epidemic modeling and response support. George Li, Aravind Srinivasan, and Leonidas Tsepenekas were supported in part by NSF award number CCF-1918749. Ann Li, Madhav Marathe, and Anil Vullikanti were supported by DTRA (Contract HDTRA1-19-D-0007), University of Virginia Strategic Investment Fund award number SIF160, National Institutes of Health (NIH) Grants 1R01GM109718, 2R01GM109718, OAC-1916805 (CINES), CCF-1918656 (Expeditions), CNS-2028004 (RAPID), OAC-2027541 (RAPID), IIS-1908530, IIS-1955797, and IIS-2027848.  The U.S. Government is authorized to reproduce and distribute reprints for Governmental purposes notwithstanding any copyright annotation thereon.

\bibliographystyle{named}
\bibliography{refs.bib}

\end{document}